\begin{document}
\pagenumbering{arabic}
\sloppy

\title{A Novel Intrinsic Measure of Data Separability 
}

\titlerunning{A Novel Intrinsic Measure of Data Separability}        

\author{Shuyue Guan         \and
        Murray Loew 
}


\institute{Shuyue Guan \at
              ORCID: \href{https://orcid.org/0000-0002-3779-9368}{0000-0002-3779-9368} \\
              \email{frankshuyueguan@gwu.edu}           
           \and
           Murray Loew \at
              Corresponding author \\
              \email{loew@gwu.edu} \\
              \\
           Department of Biomedical Engineering, \\
           George Washington University, Washington DC, USA
}

\date{Received: date / Accepted: date}

\maketitle

\begin{abstract}
In machine learning, the performance of a classifier depends on both the classifier model and the separability/complexity of datasets. To quantitatively measure the separability of datasets, we create an \textit{intrinsic measure} -- the Distance-based Separability Index (DSI), which is independent of the classifier model. We consider the situation in which different classes of data are mixed in the same distribution to be the most difficult for classifiers to separate. We then formally show that the DSI can indicate whether the distributions of datasets are identical for any dimensionality. And we verify the DSI to be an effective separability measure by comparing to several state-of-the-art separability/complexity measures using synthetic and real datasets. Having demonstrated the DSI's ability to compare distributions of samples, we also discuss some of its other promising applications, such as measuring the performance of generative adversarial networks (GANs) and evaluating the results of clustering methods.
\keywords{data complexity \and data separability measure \and learning difficulty \and machine learning}
\end{abstract}
\textbf{Declarations} Not applicable.

\section{Introduction}

Data and models are the two main foundations of machine learning and deep learning. Models learn knowledge (patterns) from datasets. An example is that the convolutional neural network (CNN) classifier learns how to recognize images from different classes. There are two aspects in which we examine the learning process: capability of the classifier and the separability of dataset. Separability is an intrinsic characteristic of a dataset \cite{fernandez_data_2018} to describe how data points belonging to different classes mix with each other. The learning outcomes are highly dependent on the two aspects. For a specific model, the learning capability is fixed, so that the training process depends on the training data. As reported by Chiyuan~et~al.~\cite{1}, the time to convergence for the same training loss on random labels on the CIFAR-10 dataset was greater than when training on true labels. It is not surprising that the performance of a given model varies between different training datasets depending on their separability. For example, in a two-class problem, if the scattering area for each class has no overlap, one straight line (or hyperplane) can completely separate the data points (Figure~\ref{fig:1}a). For the distribution shown in Figure~\ref{fig:1}b, however, a single straight line cannot separate the data points successfully, but a combination of many lines can.

\begin{figure}[h]
    \centerline{\includegraphics[width=0.3\textwidth]{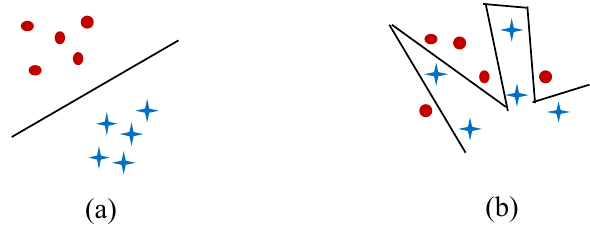}}
    \caption{Different separability of two datasets}
    \label{fig:1}
\end{figure}

In machine learning, for a given classifier, it is more difficult to train on some datasets than on others. The difficulty of training on a less-separable dataset is made evident by the requirement for greater learning times (\textit{e.g.}, number of epochs for deep learning) to reach the same accuracy (or loss) value and/or to obtain a lower accuracy (or higher loss), compared with the more-separable dataset. The training difficulty, however, also depends on the model employed. Hence, an intrinsic measure of separability that is independent of the model is required. In other words, it is very useful to be able to measure the separability of a dataset without using a classifier model. In summary, the separability of a dataset can be characterized in three ways:
\begin{enumerate}
    \item To describe how data points belonging to different classes mix with each other.
    \item To determine the number of [hyper-] planes/linear-dividers needed to separate different-class data points.
    \item To gain insights into the training of a specific classifier with regard to time-cost and final accuracy.
\end{enumerate}
Our proposed method is based on the first way. The second and third ways depend on classifiers; we used the third way to verify our method.

\subsection{Related work}
A review of the literature indicates that there have been substantially fewer studies on data separability \textit{per se} than on classifiers. The Fisher discriminant ratio (FDR) \cite{2} measures the separability of data using the mean and standard deviation (SD) of each class. It has been used in many studies, but it fails in some cases (\textit{e.g.}, as Figure~\ref{fig:4}(e) shows, Class 1 data points are scattered around Class 2 data points in a circle; their FDR $\approx0$). A more general issue than that of data separability is data complexity, which measures not only the relationship between classes but also the data distribution in feature space. Ho~and~Basu~\cite{3} conducted a groundbreaking review of data complexity measures. They reported measures for classification difficulty, including those associated with the geometrical complexity of class boundaries. Recently, Lorena~et~al.~\cite{4} summarized existing methods for the measurement of classification complexity. In the survey, most complexity measures have been grouped in six categories: \textit{feature-based}, \textit{linearity}, \textit{neighborhood}, \textit{network}, \textit{dimensionality}, and \textit{class imbalance} measures (Table~\ref{table:1}). For example, the FDR is a \textit{feature-based} measure, and the geometric separability index (GSI) proposed by Thornton~\cite{5} is considered a \textit{neighborhood} measure. Other ungrouped measures discussed in Lorena’s paper have similar characteristics to the grouped measures or may have large time cost. Each of these methods has possible drawbacks. In particular, the features extracted from data for the five categories of \textit{feature-based} measures may not accurately describe some key characteristics of the data; some \textit{linearity} measures depend on the classifier used, such as support-vector machines (SVMs); \textit{neighborhood} measures may show only local information; some \textit{network} measures may also be affected by local relationships between classes depending on the computational methods employed; \textit{dimensionality} measures are not strongly related to classification complexity; and, \textit{class imbalance} measures do not take the distribution of data into account.

\begin{table}[h]
    \caption{Complexity measures reported by Lorena~et~al.~\cite{4}} 
    \label{table:1}
    \centering
    \resizebox{0.48\textwidth}{!}{
    \begin{tabular}{l p{0.3\textwidth} c}
    \hline\noalign{\smallskip}
    \bfseries Category & \bfseries Name & \bfseries Code \\
    \noalign{\smallskip}\hline\noalign{\smallskip}
    \multirow{8}{*}{Feature-based} & Maximum Fisher’s discriminant ratio & F1\\
             & Directional vector maximum Fisher’s discriminant ratio & F1v\\
             & Volume of overlapping region  & F2\\
             & Maximum individual feature efficiency & F3\\
             & Collective feature efficiency  & F4\\
    \noalign{\smallskip}\hline\noalign{\smallskip}
    \multirow{5}{*}{Linearity} & Sum of the error distance for linear programming & L1\\
             & Error rate of the linear classifier & L2\\
             & Non-linearity of the linear classifier  & L3\\
    \noalign{\smallskip}\hline\noalign{\smallskip}
        \multirow{8}{*}{Neighborhood} & Fraction of borderline points & N1\\
             & Ratio of intra/extra class NN distance & N2\\
             & Error rate of the NN classifier  & N3\\
             & Non-linearity of the NN classifier & N4\\
             & Fraction of hyperspheres covering the data & T1\\
             & Local set average cardinality & LSC\\
    \noalign{\smallskip}\hline\noalign{\smallskip}
        \multirow{3}{*}{Network} & Density & Density\\
             & Clustering coefficient & ClsCoef\\
             & Hubs & Hubs\\
    \noalign{\smallskip}\hline\noalign{\smallskip}
        \multirow{6}{*}{Dimensionality} & Average number of features per dimension & T2\\
             & Average number of PCA dimensions per point & T3\\
             & Ratio of the PCA dimension to the original dimension & T4\\
    \noalign{\smallskip}\hline\noalign{\smallskip}
        \multirow{2}{*}{Class imbalance} & Entropy of class proportions & C1\\
             & Imbalance ratio & C2\\
    \noalign{\smallskip}\hline
    \end{tabular}}
\end{table}

\begin{figure*}[t]
    \centering
        \includegraphics[width=0.8\textwidth]{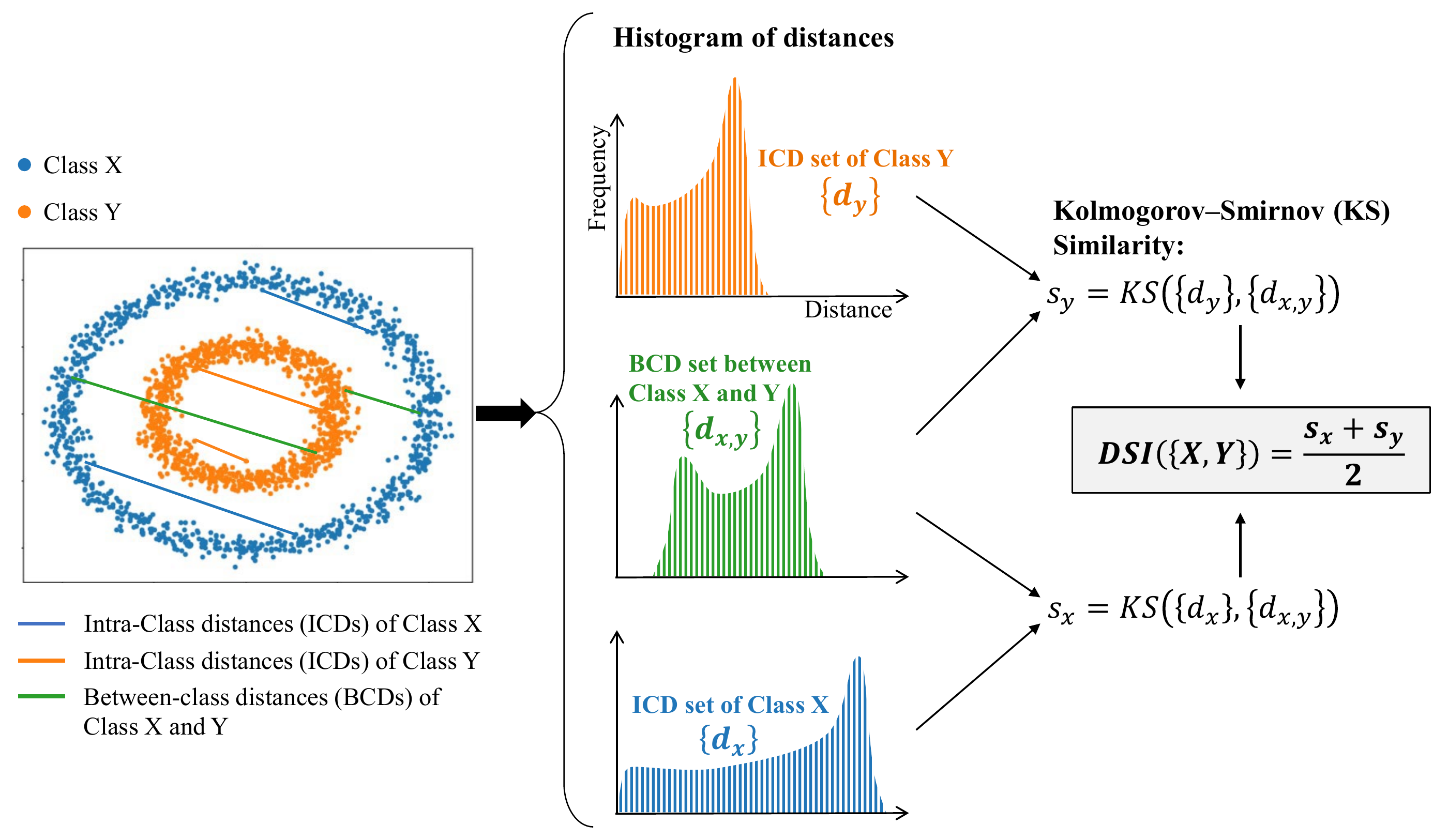}
    \caption{An example of two-class dataset in 2-D shows the definition and computation of the DSI. Details about the ICD and BCD sets are in Section~\ref{icd_bcd}, and Section~\ref{KS} contains more details about computation of the DSI. The proof that DSI can measure the separability of dataset is shown in Section~\ref{core_thm}.}
    \label{fig:DSI_abs}
\end{figure*}

In this paper, we create a novel separability measure for multi-class datasets and verify it by comparing with other separability/complexity measures using synthetic and real (CIFAR-10/100) datasets. Since several previous studies \cite{5,6,7,8,9} have used the term separability index (SI), we refer to our measure as the distance-based separability index (DSI). The DSI measure is similar in some respects to the \textit{network} measures because it represents the universal relations between the data points. Especially, we have formally shown that the DSI can indicate whether the distributions of datasets are identical for any dimensionality. Figure~\ref{fig:DSI_abs} shows the definition and computation of the DSI by an example of a two-class dataset in 2-D. In general, the DSI has a wide applicability and is not limited to simply understanding the data; for example, it can also be applied to measure generative adversarial network (GAN) performance \cite{guan2021novel}, evaluate clustering results \cite{guan2020cluster}, anomaly detection \cite{2021anomaly}, the selection of classifiers \cite{10,11,12,13}, and features for classification \cite{14,15}.

The novelty of this study is the examination of the 1) distributions of datasets via 2) the distributions of \underline{distances} between data points in datasets; the proved Theorem connects the two kinds of distributions; that is the gist of DSI. To the best of our knowledge, none of the existing studies uses the same methods. 

\section{Methodological Development for DSI}
\label{Methodological development}

\begin{figure}[h]
    \centering
        \includegraphics{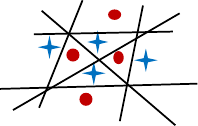}
    \caption{Two-class dataset with maximum entropy}
    \label{fig:2}
\end{figure}

In a two-class dataset, we consider that the most difficult situation to separate the data is when the two classes of data are scattered and mixed together with the same distribution. In this situation, the proportion of each class in every small region is equal, and the system has maximum entropy. In extreme cases, to obtain 100\% classification accuracy, the classifier must separate each data point into an individual region (Figure~\ref{fig:2}).

Therefore, one possible definition of data separability is the inverse of a system's entropy. To calculate entropy, the space could be randomly divided into many small regions. Then, the proportions of each class in every small region can be considered as their occurrence probabilities. The system's entropy can be derived from those probabilities \cite{Shannon}. In high-dimensional space (\textit{e.g.}, image data), however, the number of small regions grows exponentially. For example, the space for $32\times 32$ pixels 8-bit RGB images has 3,072 dimensions. If each dimension (ranging from 0 to 255, integer) is divided into 32 intervals, the total number of small regions is $32^{3072}\approx 6.62\times 10^{4623}$. It is thus impossible to compute the system's entropy and analyze data separability in this way.

Alternatively, we can define the data separability as the similarity of data distributions. If a dataset contains two classes $X$ and $Y$ with the same distribution (distributions have the same shape, position and support, \textit{i.e.} the same probability density function) and have sufficient data points to fill the region, this dataset reaches the maximum entropy because within any small regions, the occurrence probabilities of the two classes data are equal (50\%). It is also the most difficult situation for separation of the dataset.

Here, we proposed a new method -- distance-based separability index (DSI) to measure the similarity of data distributions. DSI is used to analyze how two classes of data are mixed together, as a substitute for entropy.

\subsection{Intra-class and between-class distance sets}
\label{icd_bcd}
Before introducing the DSI, we introduce the intra-class distance (ICD) and the between-class distance (BCD) set that are used for computation of the DSI. The ``set'' in this paper means the \textit{multiset} that allows \underline{duplicate} elements (distance values, in our cases), and the $|A|$ of a ``set'' A is the number of its elements. 

Suppose $X$ and $Y$ have $N_x$  and $N_y$ data points, respectively, we can define:

\begin{definition} \label{def:1}
The intra-class distance (ICD) set $\{d_x\}$ is a set of distances between any two points in the same class $(X)$, as: $\{d_x\}=\{ \|x_i-x_j\|_2 | x_i,x_j\in X;x_i\neq x_j\}$.
\end{definition}

\begin{corollary} \label{cor:1}
Given $|X|=N_x$, then $|\{d_x\}|=\frac{1}{2}N_x(N_x-~1)$.
\end{corollary}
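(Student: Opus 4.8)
The plan is to treat this as a direct combinatorial counting argument, since the statement has no analytic content. First I would pin down the interpretation of the multiset $\{d_x\}$ that is consistent with the claimed formula: although Definition~\ref{def:1} writes the generating condition as $x_i, x_j \in X$ with $x_i \neq x_j$, the symmetry $\|x_i - x_j\|_2 = \|x_j - x_i\|_2$ means the natural index set for the elements of $\{d_x\}$ is the collection of \emph{unordered} pairs $\{x_i, x_j\}$ of distinct points of $X$, with one distance value recorded per such pair. This is the reading under which $|\{d_x\}|$ matches $\tfrac12 N_x(N_x-1)$, and I would state it explicitly at the start of the proof.

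Next I would count those unordered pairs. Given $|X| = N_x$, the number of ordered pairs $(x_i, x_j)$ with $x_i \neq x_j$ is $N_x(N_x - 1)$, and each unordered pair $\{x_i, x_j\}$ is obtained from exactly two such ordered pairs; hence the number of unordered pairs of distinct points is $\binom{N_x}{2} = \tfrac12 N_x(N_x - 1)$.

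Finally, I would invoke the multiset convention fixed just before the definition: because $\{d_x\}$ is a multiset, coincidences among distance values (which genuinely occur, e.g. for points on a common sphere or in symmetric configurations) are retained rather than merged, and $|\cdot|$ counts elements with multiplicity. Therefore $|\{d_x\}|$ equals the number of unordered pairs counted above, namely $\tfrac12 N_x(N_x - 1)$, which is the claim. The only point requiring any care is the ordered-versus-unordered bookkeeping combined with the multiset convention; beyond that I expect no obstacle.
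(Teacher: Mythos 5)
Your proof is correct and coincides with the paper's intended reasoning: the corollary is stated without an explicit proof, being the immediate count of unordered pairs of distinct points, $\binom{N_x}{2}=\tfrac12 N_x(N_x-1)$, under the multiset convention that duplicate distance values are retained. Your explicit remarks on the ordered-versus-unordered reading of Definition~\ref{def:1} and on multiplicity are exactly the right points to pin down, and nothing further is needed.
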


\begin{definition} \label{def:2}
The between-class distance (BCD) set $\{ d_{x,y}\}$ is the set of distances between any two points from different classes $(X \, and \, Y)$, as $\{ d_{x,y} \}=\{ \|x_i-~y_j\|_2 \, |\, x_i\in X;y_j\in Y \}$.
\end{definition}

\begin{corollary} \label{cor:2}
Given $|X|=N_x,|Y|=N_y$, then $|\{d_{x,y} \}|=N_x N_y$.
\end{corollary}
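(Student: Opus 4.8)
The plan is to count the elements of the BCD multiset directly from Definition~\ref{def:2}, which assigns to each pair $(x_i,y_j)$ with $x_i\in X$ and $y_j\in Y$ exactly one element $\|x_i-y_j\|_2$ of $\{d_{x,y}\}$. So the cardinality $|\{d_{x,y}\}|$ equals the number of such pairs, and the task reduces to a single application of the product rule.

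First I would observe that, in contrast with the ICD set of Definition~\ref{def:1}, there is no exclusion condition here: the definition imposes no constraint relating $x_i$ and $y_j$, and since $X$ and $Y$ are distinct classes we never need to discard a ``diagonal'' term of the form $x_i=y_j$. Hence both coordinates range freely and independently — $N_x$ choices for $x_i$ and $N_y$ choices for $y_j$ — which yields $N_x N_y$ pairs, and therefore $|\{d_{x,y}\}|=N_x N_y$.

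The only point that requires a moment's care, and it is hardly an obstacle, is to respect the multiset convention stressed in Section~\ref{icd_bcd}: distinct pairs $(x_i,y_j)$ contribute distinct elements even when they happen to produce numerically equal distances, so no de-duplication takes place; and because the first entry always comes from $X$ and the second from $Y$, there is no symmetrization to quotient out (unlike the unordered ICD count $\tfrac{1}{2}N_x(N_x-1)$ of Corollary~\ref{cor:1}, where the pair $\{x_i,x_j\}$ is counted once). With that convention fixed, the equality $|\{d_{x,y}\}|=N_x N_y$ is immediate.
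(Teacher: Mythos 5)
Your proposal is correct and matches the paper's (implicit) justification: the corollary is stated without proof precisely because it is the direct product-rule count of ordered pairs $(x_i,y_j)\in X\times Y$ under the multiset convention, which is exactly what you argue. Your remarks on the absence of a diagonal exclusion and of the factor $\tfrac{1}{2}$ correctly explain the contrast with Corollary~\ref{cor:1}, and the same counting is reused later in the paper (e.g.\ $|\{d_{x_i,y_i}\}|=n_{xi}n_{yi}$ in the proof of Theorem~\ref{thm:1}).
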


\begin{remark}
The metric for all distances is \textit{Euclidean} $(l^2\,\text{norm})$ in this paper. In Section~\ref{metrics}, we compare the Euclidean distance with some other distance metrics including City-block, Chebyshev, Correlation, Cosine, and Mahalanobis, and we showed that the DSI based on Euclidean distance has the best sensitivity to complexity, and thus we selected it.
\end{remark}

\subsection{Definition and computation of the DSI}
\label{KS}

We firstly introduce the computation of the DSI for a dataset contains only two classes $X$ and $Y$:
\begin{enumerate}
    \item First, the ICD sets of $X$ and $Y$: $\{d_x\},\{d_y\}$ and the BCD set: $\{d_{x,y}\}$ are computed by their definitions (Defs.~\ref{def:1}~and~\ref{def:2}). 

    \item Second, the similarities between the ICD and BCD sets are then computed using the the Kolmogorov–Smirnov (KS)~\cite{21} distance\footnote{In experiments, we used the \texttt{scipy.stats.ks\_2samp} from the SciPy package in Python to compute the KS distance. \url{https://docs.scipy.org/doc/scipy/reference/generated/scipy.stats.ks_2samp.html}}:
    \[
    s_x=KS(\{d_x\},\{d_{x,y}\}),\ \text{and}\ s_y=KS(\{d_y\},\{d_{x,y}\}).
    \]
    We explain the reasons to choose the KS distance in Section~\ref{measures}.
    
    \item Finally, the DSI is the average of the two KS distances:
    \[
    DSI(\{X,Y\})=\frac{(s_x+s_y )}{2}.
    \]
    
\end{enumerate}

\begin{remark}
We do not use the weighted average because once the distributions of the ICD and BCD sets can be well characterized, the sizes of $X$ and $Y$ will not affect the KS distances $s_x$ and $s_y$.
\end{remark}

For a multi-class dataset, the DSI can be computed by one-\textit{versus}-others; specifically, for a $n$-class dataset, the process to obtain its DSI is:
\begin{enumerate}
    \item Compute $n$ ICD sets for each class: $\{d_{C_i}\};\; i=1,2,\cdots,n$, and compute $n$ BCD sets for each class. For the $i$-th class of data $C_i$, the BCD set is the set of distances between any two points in $C_i$ and $\overline{C_i}$ (other classes, not $C_i$): $\{d_{C_i,\overline{C_i}}\}$.
    \item Compute the $n$ KS distances between ICD and BCD sets for each class: 
    \[
    s_i=KS(\{d_{C_i }\},\{d_{C_i,\overline{C_i}}\}).
    \]
    \item Calculate the average of the $n$ KS distances; the DSI of this dataset is:
    \[
    DSI(\{C_i \})=\frac{\sum s_i}{n}.
    \]
\end{enumerate}

Therefore, DSI is (defined as) the mean value of KS distances between the ICD and BCD sets for each class of data in a dataset.
\begin{remark}
$DSI\in (0,1)$. A small DSI (low separability) means that the ICD and BCD sets are very similar. In this case, by Theorem \ref{thm:1}, the distributions of datasets are similar too. Hence, these datasets are difficult to separate.
\end{remark}

\subsection{Theorem: DSI and similarity of data distributions}
\label{core_thm}

Then, the Theorem \ref{thm:1} shows how the ICD and BCD sets are related to the distributions of the two-class data; it demonstrates the core value of this study.

\begin{theorem} \label{thm:1}
When $|X|\ \text{and} \ |Y|\to \infty$, if and only if the two classes $X$ and $Y$ have the same distribution, the distributions of the ICD and BCD sets are identical.
\end{theorem}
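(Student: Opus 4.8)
The plan is to pass to the large-sample limit and then reduce the statement to the equality case of the energy distance. Model the two classes as i.i.d.\ samples: $x_1,\dots,x_{N_x}$ drawn from a law $P$ on $\mathbb{R}^d$ and $y_1,\dots,y_{N_y}$ drawn from a law $Q$, and assume finite first moments $\mathbb{E}\|X\|,\mathbb{E}\|Y\|<\infty$. First I would identify the limiting objects attached to the three multisets. As $N_x,N_y\to\infty$, the empirical law of the ICD set $\{d_x\}$ converges --- by the law of large numbers for the pair-distance $U$-statistic, equivalently Glivenko--Cantelli for the induced empirical CDF --- to the law of $R_{XX}:=\|X-X'\|$, where $X,X'$ are independent copies from $P$; likewise the empirical law of $\{d_y\}$ converges to that of $R_{YY}:=\|Y-Y'\|$ with $Y,Y'$ independent from $Q$, and the empirical law of the BCD set $\{d_{x,y}\}$ converges to that of $R_{XY}:=\|X-Y\|$ with $X\sim P$, $Y\sim Q$ independent. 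So ``the distributions of the ICD and BCD sets are identical'' means exactly $R_{XX}\stackrel{d}{=}R_{XY}$ and $R_{YY}\stackrel{d}{=}R_{XY}$ (equivalently, all three laws coincide).

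The forward implication is then immediate: if $P=Q$, then $X,X',Y,Y'$ are four i.i.d.\ draws from the common law, so $R_{XX}$, $R_{YY}$ and $R_{XY}$ are each simply the distance between two independent draws from that law, hence have the same distribution.

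For the converse I would invoke the energy-distance identity. Put $\mu:=P-Q$, a finite signed measure with $\mu(\mathbb{R}^d)=0$. Because the Euclidean norm is a kernel of conditionally negative type on $\mathbb{R}^d$ for every $d$ --- equivalently, $(x,y)\mapsto\|x\|+\|y\|-\|x-y\|$ is positive definite (Schoenberg) --- one has $\iint\|x-y\|\,d\mu(x)\,d\mu(y)\le 0$, with equality \emph{if and only if} $\mu\equiv 0$, i.e.\ $P=Q$; this is the Sz\'ekely--Rizzo characterization, and its equality case is where the finite-first-moment hypothesis is used. Expanding the integral gives $\mathbb{E}R_{XX}+\mathbb{E}R_{YY}-2\,\mathbb{E}R_{XY}$. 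Under the hypothesis $R_{XX}\stackrel{d}{=}R_{XY}\stackrel{d}{=}R_{YY}$ all three expectations coincide, so this quantity vanishes, and the equality case forces $P=Q$. (In fact only the single moment relation $2\,\mathbb{E}R_{XY}=\mathbb{E}R_{XX}+\mathbb{E}R_{YY}$ is needed, so the full distributional equality in the hypothesis is more than enough; this also explains the ``for any dimensionality'' claim, since the kernel property holds in all $\mathbb{R}^d$.)

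The hard part is the converse, and especially avoiding the tempting but false shortcut that ``equal pair-distance distributions force equal point distributions'': this fails in general (distinct, non-isometric configurations can share the same distance multiset), so the argument must exploit the analytic positive-definiteness of the Euclidean distance kernel rather than any reconstruction of the points from their distances. The remaining work is technical rather than conceptual: (i) making rigorous the passage from the finite multisets to the limiting laws $R_{XX},R_{YY},R_{XY}$, checking that dropping self-distances $x_i=x_i$ is asymptotically negligible and that the KS metric indeed controls the relevant convergence of empirical CDFs; and (ii) carrying the finite-first-moment assumption, which is needed both for the energy distance to be well defined and for its equality case to yield $P=Q$.
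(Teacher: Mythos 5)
Your proof is correct, but it takes a genuinely different route from the paper's. The paper argues by discretization: it partitions the common support into tiny cells, tabulates how many distances fall at each value ($\delta$ within a cell, $D_{ij}$ between cells $\Delta_i$ and $\Delta_j$), and shows that the proportions of each value in the ICD and BCD sets agree in the limit if and only if every local sampling ratio $n_{yi}/n_{xi}$ equals the global ratio $\alpha$, which by its Lemma~1 is equivalent to the densities coinciding; the necessity half runs by contradiction on the within-cell counts, combining the two proportion identities to force all $\alpha_i=\alpha$. You instead pass to the population laws $R_{XX}$, $R_{YY}$, $R_{XY}$ and dispatch the converse with the Sz\'ekely--Rizzo energy-distance identity, i.e.\ the strict conditional negative definiteness of the Euclidean norm. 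What your route buys: it is fully rigorous where the paper's is informal (the ``tiny cell'' limits, and the final step ``since $n_{xi}$ could be any value, $\rho_i=1$'' really needs both moment identities (4)--(5) combined into a variance argument rather than the single eliminated equation); it makes explicit the finite-first-moment hypothesis that the paper leaves implicit; it proves a strictly stronger converse, since only $2\,\mathbb{E}R_{XY}=\mathbb{E}R_{XX}+\mathbb{E}R_{YY}$ is used rather than full distributional equality; and your warning against the shortcut ``equal pair-distance distributions imply equal point distributions'' is a real issue that the paper's informal explanation glosses over. What the paper's route buys is self-containedness: it is elementary counting, requires no appeal to Schoenberg's theorem or the energy-statistics literature, and directly exhibits \emph{which} distance values become over- or under-represented when the distributions differ, which is closer in spirit to how the DSI is actually computed.
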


The full proof of Theorem \ref{thm:1} is in Section \ref{prof}. Here we provide an informal explanation: 
\begin{quote}
Data points in $X$ and $Y$ having the same distribution can be considered to have been sampled from one distribution $Z$. Hence, both ICDs of $X$ and $Y$, and BCDs between $X$ and $Y$ are actually ICDs of $Z$. Consequently, the distributions of ICDs and BCDs are identical. In other words, that the distributions of the ICD and BCD sets are identical indicates all labels are assigned randomly and thus, the dataset has the least separability. 
\end{quote}

According to this theorem, that the distributions of the ICD and BCD sets are identical indicates that the dataset has maximum entropy because $X$ and $Y$ have the same distribution. Thus, as we discussed before, the dataset has the lowest separability. And in this situation, the dataset's DSI $\approx 0$ by its definition.

The time costs for computing the ICD and BCD sets increase linearly with the number of dimensions and quadratically with the number of data points. It is much better than computing the dataset’s entropy by dividing the space into many small regions. Our experiments (in Section~\ref{subset_cifar}) show that the time costs could be greatly reduced using a small random subset of the entire dataset without significantly affecting the results (Figure~\ref{fig:5}). And in practice, the computation of DSI can be sped-up considerably by using tensor-based matrix multiplications on a GPU (\textit{e.g.}, it takes about 2.4 seconds for 4000 images from CIFAR-10 running on a GTX 1080 Ti graphics card) because the main time-cost is the computation of distances.

\subsection{Proof of Theorem 1}
\label{prof}

Consider two classes $X$ and $Y$ that have the same distribution (distributions have the same shape, position, and support, \textit{i.e.}, the same probability density function) and have sufficient data points ($|X|\ \text{and} \ |Y|\to \infty$) to fill their support domains. Suppose $X$ and $Y$ have $N_x$ and $N_y$ data points, and assume the sampling density ratio is $\frac{N_y}{N_x} =\alpha$. Before providing the proof of Theorem \ref{thm:1}, we firstly prove Lemma \ref{lma:1}, which will be used later.

\begin{remark}
The condition of most relevant equations in the proof is that the $N_x$ and $N_y$ are approaching infinity in the limit.
\end{remark}

\begin{lemma} \label{lma:1}
If and only if two classes $X$ and $Y$ have the same distribution covering region $\Omega$ and $\frac{N_y}{N_x} =\alpha$, for any sub-region $\Delta \subseteq \Omega$, with $X$ and $Y$ having $n_{xi},n_{yi}$ points, $\frac{n_{yi}}{n_{xi}} =\alpha$ holds.
\end{lemma}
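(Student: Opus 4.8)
The plan is to read Lemma~\ref{lma:1} as a statement about limiting empirical frequencies and to reduce it, in both directions, to the law of large numbers together with the fact that a probability measure on $\Omega$ is pinned down by its values on sub-regions. Throughout I would restrict attention to sub-regions $\Delta\subseteq\Omega$ of positive measure, since on a null region both $n_{xi}$ and $n_{yi}$ vanish in the limit and the ratio $n_{yi}/n_{xi}$ is the indeterminate form $0/0$; there the claim is vacuous.

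For the forward direction I would argue as follows. Assume $X$ and $Y$ are i.i.d.\ samples from a common density $f$ on $\Omega$ with $N_y/N_x=\alpha$, fix $\Delta$, and set $p=\int_\Delta f\in(0,1]$. Each point of $X$ independently lands in $\Delta$ with probability $p$, so $n_{xi}\sim\mathrm{Binomial}(N_x,p)$ with $\mathbb{E}[n_{xi}]=N_x\,p$, and likewise $n_{yi}\sim\mathrm{Binomial}(N_y,p)$ with $\mathbb{E}[n_{yi}]=N_y\,p$. By the law of large numbers $n_{xi}/N_x\to p$ and $n_{yi}/N_y\to p$ as $N_x,N_y\to\infty$ (and, since $p>0$, both counts tend to $\infty$, so the ratio is eventually well defined), whence
\[
\frac{n_{yi}}{n_{xi}}=\frac{n_{yi}/N_y}{n_{xi}/N_x}\cdot\frac{N_y}{N_x}\;\longrightarrow\;\frac{p}{p}\cdot\alpha=\alpha .
\]

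For the converse I would start from the hypothesis that $n_{yi}/n_{xi}=\alpha$ for every sub-region $\Delta\subseteq\Omega$; taking $\Delta=\Omega$ recovers $N_y/N_x=\alpha$, so the hypothesis is equivalent to $n_{xi}/N_x=n_{yi}/N_y$ on every $\Delta$. Letting $N_x,N_y\to\infty$, the two empirical frequencies converge to $P_X(\Delta)$ and $P_Y(\Delta)$, the masses placed on $\Delta$ by the distributions of $X$ and of $Y$; hence $P_X(\Delta)=P_Y(\Delta)$ for all $\Delta\subseteq\Omega$. Since the admissible sub-regions contain a generating class (e.g.\ all axis-aligned rectangles, or all balls, contained in $\Omega$), the standard uniqueness argument for measures gives $P_X=P_Y$, i.e.\ $X$ and $Y$ have the same distribution.

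The main obstacle is not a computation but making the $|X|,|Y|\to\infty$ limit precise: the lemma is really an assertion about expected (limiting) proportions, so the honest route is through $\mathbb{E}[n_{xi}]=N_x\int_\Delta f$ plus a concentration step, rather than treating $n_{xi},n_{yi}$ as exact deterministic counts. The only other point needing care is the convention that excludes measure-zero sub-regions, so that $n_{yi}/n_{xi}$ is defined; with that in place, both directions close as above.
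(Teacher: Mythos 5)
Your proof is correct and reaches the right conclusion, but it takes a noticeably different (and more careful) route than the paper's. The paper's argument is purely local and deterministic: it takes an infinitesimal cell $\Delta_i$, writes the counts as exact expected values $n_{xi}=\Delta_i f(x_i)N_x$ and $n_{yi}=\Delta_i g(x_i)N_y$, and reads both directions of the ``if and only if'' off the single algebraic chain $n_{yi}/n_{xi}=\alpha \Leftrightarrow g(x_i)/f(x_i)=1 \Leftrightarrow f\equiv g$. You instead treat the counts as the binomial random variables they actually are, obtain the forward direction from the law of large numbers, and obtain the converse by passing from equality of limiting empirical frequencies to equality of the measures $P_X$ and $P_Y$ on a generating class of sub-regions. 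Your version buys (i) an honest treatment of the $|X|,|Y|\to\infty$ limit, which the paper elides by identifying a random count with its expectation --- a gap you correctly diagnose as the main obstacle; (ii) independence from the existence of densities and from dividing by $f(x_i)$ at points where it could vanish; and (iii) explicit handling of null sub-regions where $n_{yi}/n_{xi}$ is the indeterminate form $0/0$. The paper's version buys brevity and, more importantly, the pointwise density identity $f=g$ in exactly the per-cell form ($\alpha_i=\alpha\, g(x_i)/f(x_i)$) that it reuses in the proof of the necessary condition of Theorem~\ref{thm:1}. Both arguments rest on the same underlying observation --- the count ratio in a region reveals the ratio of probability masses --- so there is no gap, only a difference in the level of formalization and in whether the conclusion is phrased at the level of densities or of measures.
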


\begin{proof}
Assume the distributions of $X$ and $Y$ are $f(x)$ and $g(y)$. In the union region of $X$ and $Y$, arbitrarily take one tiny cell (region) $\Delta_i$ with $n_{xi}=\Delta_if(x_i)N_x, n_{yi}=\Delta_ig(y_j)N_y; x_i=y_j$. Then,
\[
\frac{n_{yi}}{n_{xi}}=\frac{\Delta_ig(x_i)N_y}{\Delta_if(x_i)N_x}=\alpha \frac{g(x_i)}{f(x_i)}
\]
Therefore:
\[
\alpha \frac{g(x_i)}{f(x_i)} =\alpha \Leftrightarrow \frac{g(x_i)}{f(x_i)}=1 \Leftrightarrow \forall x_i:g(x_i)=f(x_i)
\]  \hfill $\square$
\end{proof}

\subsubsection{Sufficient condition}
\textbf{Sufficient condition of Theorem \ref{thm:1}.} \textit{If the two classes $X$ and $Y$ with the same distribution and have sufficient data points ($|X|\ \text{and} \ |Y|\to \infty$), then the distributions of the ICD and BCD sets are nearly identical.}

\begin{figure}[h]
    \centerline{\includegraphics{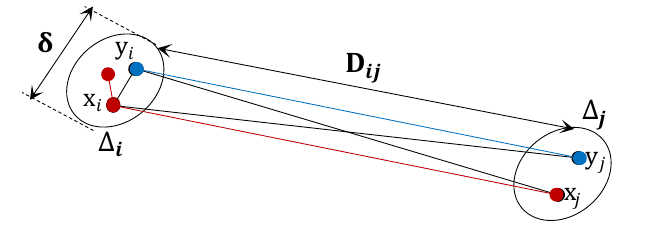}}
    \caption{Two non-overlapping small cells}
    \label{fig:3}
\end{figure}

\begin{proof}
Within the area, select two tiny non-overlapping cells (regions) $\Delta_i$  and $\Delta_j$ (Figure~\ref{fig:3}). Since $X$ and $Y$ have the same distribution but in general different densities, the number of points in the two cells $n_{xi},n_{yi};n_{xj},n_{yj}$ fulfills:
\[
\frac{n_{yi}}{n_{xi}} =\frac{n_{yj}}{n_{xj}} =\alpha
\]
The scale of cells is $\delta$, the ICDs and BCDs of $X$ and $Y$ data points in cell $\Delta_i$ are approximately $\delta$ because the cell is sufficiently small. By the Definition~\ref{def:1}~and~\ref{def:2}:
\[
d_{x_i}\approx d_{x_i,y_i}\approx \delta;\quad x_i,y_i\in \Delta_i
\]
Similarly, the ICDs and BCDs of $X$ and $Y$ data points between cells $\Delta_i$  and $\Delta_j$ are approximately the distance between the two cells $D_{ij}$:
\[
d_{x_{ij}}\approx d_{x_i,y_j}\approx d_{y_i,x_j}\approx D_{ij};\; x_i,y_i\in \Delta_i;\, x_j,y_j\in \Delta_j
\]
First, divide the whole distribution region into many non-overlapping cells. Arbitrarily select two cells $\Delta_i$  and $\Delta_j$ to examine the ICD set for $X$ and the BCD set for $X$ and $Y$. By Corollaries \ref{cor:1} and \ref{cor:2}:

\romannum{1} ) The ICD set for $X$ has two distances: $\delta$ and $D_{ij}$, and their numbers are:
\[
d_{x_i}\approx \delta;\; x_i\in \Delta_i:\; |\{d_{x_i}\}|=\frac{1}{2}n_{xi}(n_{xi}-1)
\]
\[
d_{x_{ij}}\approx D_{ij};\; x_i\in \Delta_i;x_j\in \Delta_j:\; |\{d_{x_{ij}}\}|=n_{xi}n_{xj}
\]
\romannum{2} ) The BCD set for $X$ and $Y$ also has two distances: $\delta$ and $D_{ij}$, and their numbers are:
\[
d_{x_i,y_i}\approx \delta;\; x_i,y_i\in \Delta_i:\; |\{d_{x_i ,y_i}\}|=n_{xi} n_{yi}
\]
\[
d_{x_i,y_j}\approx d_{y_i,x_j}\approx D_{ij};\; x_i,y_i\in \Delta_i;x_j,y_j\in \Delta_j:
\]
\[
|\{d_{x_i,y_j}\}|=n_{xi} n_{yj};\; |\{d_{y_i,x_j}\}|=n_{yi} n_{xj}
\]
Therefore, the proportions of the number of distances with a value of $D_{ij}$ in the ICD and BCD sets are:

For ICDs: 
\[
\frac{|\{d_{x_{ij}} \}|}{|\{d_x \}|} =\frac{2n_{xi} n_{xj}}{N_x (N_x-1)}
\]
For BCDs, considering the density ratio:
\[
\frac{|\{d_{x_i,y_j} \}|+|\{d_{y_i,x_j }\}|}{|\{d_{x,y} \}|} =\frac{\alpha n_{xi} n_{xj}+\alpha n_{xi} n_{xj}}{\alpha N_x^2 }=\frac{2n_{xi} n_{xj}}{N_x^2}
\]
The ratio of proportions of the number of distances with a value of $D_{ij}$ in the two sets is:
\[
\frac{N_x (N_x-1)}{N_x^2}=1-\frac{1}{N_x} \to 1 \; \; (N_x\to \infty)
\]
This means that the number of proportions of the number of distances with a value of $D_{ij}$ in the two sets is equal. We then examine the proportions of the number of distances with a value of $\delta$ in the ICD and BCD sets.

For ICDs:
\begin{multline*}
\sum_{i} \frac{|\{d_{x_i}\}|}{|\{d_x\}|} = \frac{\sum_{i} [n_{xi} (n_{xi}-1)]}{N_x (N_x-1)} \\ = \frac{\sum_{i} (n_{xi}^2-n_{xi} )}{N_x^2-N_x} = \frac{\sum_{i} (n_xi^2 ) -N_x}{N_x^2-N_x}
\end{multline*}
For BCDs, considering the density ratio: 
\[
\sum_{i} \frac{|\{d_{x_i,y_i } \}|}{|\{d_{x,y})\}|} = \frac{\sum_{i} (n_{xi}^2 )}{N_x^2}
\]
The ratio of proportions of the number of distances with a value of $\delta$ in the two sets is:
\begin{multline*}
\frac{\sum_{i} (n_{xi}^2 ) }{N_x^2 }\cdot \frac{N_x^2-N_x}{\sum_{i} (n_{xi}^2 ) -N_x } \\
=\sum_{i} \left(\frac{n_{xi}^2}{N_x^2} \right) \cdot \frac{1-\frac{1}{N_x}}{\sum_{i} \left(\frac{n_{xi}^2}{N_x^2} \right) -\frac{1}{N_x}}\to 1 \; \; (N_x\to \infty)
\end{multline*}
This means that the number of proportions of the number of distances with a value of $\delta$ in the two sets is equal.

In summary, the fact that the proportion of any distance value ($\delta$ or $D_{ij}$) in the ICD set for $X$ and in the BCD set for $X$ and $Y$ is equal indicates that the distributions of the ICD and BCD sets are identical, and a corresponding proof applies to the ICD set for $Y$.  \hfill $\square$
\end{proof}

\subsubsection{Necessary condition}
\textbf{Necessary condition of Theorem \ref{thm:1}.} \textit{If the distributions of the ICD and BCD sets with sufficient data points ($|X|\ \text{and} \ |Y|\to \infty$) are nearly identical, then the two classes $X$ and $Y$ must have the same distribution.}
\begin{remark}
We prove its \textbf{contrapositive}: if $X$ and $Y$ do not have the same distribution, the distributions of the ICD and BCD sets are not identical. We then apply proof by \textbf{contradiction}: suppose that $X$ and $Y$ do not have the same distribution, but the distributions of the ICD and BCD sets are identical.
\end{remark}

\begin{proof}
Suppose classes $X$ and $Y$ have the data points $N_x, N_y$, which $\frac{N_y}{N_x} =\alpha $. Divide their distribution area into many non-overlapping tiny cells (regions). In the $i$-th cell $\Delta_i$, since distributions of $X$ and $Y$ are different, according to Lemma \ref{lma:1}, the number of points in the cell $n_{xi},n_{yi}$ fulfills:
\[
\frac{n_{yi}}{n_{xi}} = \alpha _i; \; \; \exists \alpha _i \neq \alpha
\]
The scale of cells is $\delta$ and the ICDs and BCDs of the $X$ and $Y$ points in cell $\Delta_i$ are approximately $\delta$ because the cell is sufficiently small.
\[
d_{x_i}\approx d_{y_i}\approx d_{x_i,y_i}\approx \delta; \; \; x_i,y_i\in \Delta_i
\]
In the $i$-th cell $\Delta_i$:

\romannum{1}) The ICD of $X$ is $\delta$, with a proportion of:
\begin{multline} \label{eq:1}
    \sum_{i} \frac{|\{d_{x_i}\}|}{|\{d_x\}|} = \frac{\sum_{i} [n_{xi} (n_{xi}-1)]}{N_x (N_x-1)} \\
    =\frac {\sum_{i} (n_{xi}^2-n_{xi} )}{N_x^2-N_x}=\frac{\sum_{i} (n_{xi}^2 ) -N_x}{N_x^2-N_x}
\end{multline}

\romannum{2}) The ICD of $Y$ is $\delta$, with a proportion of:
\begin{multline} \label{eq:2}
    \sum_{i} \frac{|\{d_{y_i}\}|}{|\{d_y\}|} = \frac{\sum_{i} [n_{yi} (n_{yi}-1)]}{N_y (N_y-1)}=\frac {\sum_{i} (n_{yi}^2-n_{yi} )}{N_y^2-N_y}\\
    =\frac{\sum_{i} (n_{yi}^2 ) -N_y}{N_y^2-N_y}\Bigg\rvert_{\substack{N_y=\alpha N_x \\ n_{yi} = \alpha _i n_{xi}}} = \frac{\sum_{i} (\alpha _i^2 n_{xi}^2 ) -\alpha N_x}{\alpha^2 N_x^2-\alpha N_x}
\end{multline}

\romannum{3}) The BCD of $X$ and $Y$ is $\delta$, with a proportion of:
\begin{multline} \label{eq:3}
    \sum_{i} \frac{|\{d_{x_i,y_i} \}|}{|\{d_{x,y} \}|}=\frac {\sum_{i} (n_{xi} n_{yi} ) }{N_x N_y}=\frac {\sum_{i} (\alpha _i n_{xi}^2 ) }{\alpha N_x^2}
\end{multline}
For the distributions of the two sets to be identical, the ratio of proportions of the number of distances with a value of $\delta$ in the two sets must be 1, that is $\frac{(\ref{eq:3})}{(\ref{eq:1})}=\frac{(\ref{eq:3})}{(\ref{eq:2})}=1$. Therefore:

\begin{multline} \label{eq:4}
    \frac{(\ref{eq:3})}{(\ref{eq:1})}= \frac {\sum_{i} (\alpha _i n_{xi}^2 ) }{\alpha N_x^2} \cdot \frac{N_x^2-N_x}{\sum_{i} (n_{xi}^2 )-N_x}\\
    = \frac{1}{\alpha N_x^2}\sum_{i} (\alpha _i n_{xi}^2 )\cdot \frac {1-\frac {1}{N_x} }{\frac {1}{N_x^2} \sum_{i}(n_{xi}^2 ) -\frac {1}{N_x}}\Bigg\rvert_{N_x\to \infty} \\
    =\frac {1}{\alpha}\cdot \frac{\sum_{i} (\alpha_i n_{xi}^2 ) }{\sum_{i}(n_{xi}^2 )}=1
\end{multline}

Similarly,
\begin{multline} \label{eq:5}
    \frac{(\ref{eq:3})}{(\ref{eq:2})}= \frac {\sum_{i} (\alpha _i n_{xi}^2 ) }{\alpha N_x^2} \cdot \frac{\alpha^2 N_x^2-\alpha N_x}{\sum_{i} (\alpha _i^2 n_{xi}^2 ) -\alpha N_x}\\
    = \frac{\sum_{i} (\alpha _i n_{xi}^2 )}{N_x^2}\cdot \frac {\alpha-\frac {1}{N_x} }{\frac {1}{N_x^2} \sum_{i} (\alpha _i^2 n_{xi}^2 ) -\frac {\alpha}{N_x}}\Bigg\rvert_{N_x\to \infty} \\
    =\alpha \cdot \frac{\sum_{i} (\alpha_i n_{xi}^2 ) }{\sum_{i} (\alpha _i^2 n_{xi}^2 )}=1
\end{multline}

To eliminate the $\sum_{i} (\alpha _i n_{xi}^2 )$ by considering the Eq.~\ref{eq:4}~and~\ref{eq:5}, we have:
\[
\sum_{i}(n_{xi}^2 )=\frac{\sum_{i} (\alpha _i^2 n_{xi}^2 )}{\alpha^2}
\]

Let $\rho_i=\left(\frac{\alpha_i}{\alpha}\right)^2$, then,
\[
\sum_{i}(n_{xi}^2 )=\sum_{i} (\rho_i n_{xi}^2 )
\]
Since $n_{xi}$ could be any value, to hold the equation requires $\rho_i=1$. Hence:
\[
\forall \rho_i=\left(\frac{\alpha_i}{\alpha}\right)^2=1 \Rightarrow\forall \alpha_i=\alpha
\]
This contradicts $\exists \alpha_i\neq \alpha$. Therefore, the contrapositive proposition has been proved. \hfill $\square$
\end{proof}

\section{Experiments}

We test our proposed DSI measure on two-class synthetic and multi-class real datasets and compare it with other complexity measures from the Extended Complexity Library (ECoL) package \cite{4} in R. Since the DSI is computed using KS distances between the ICD and BCD sets, it ranges from 0 to 1. For separability, a higher DSI value means the dataset is easier to separate, \textit{i.e.}, it has lower data complexity. Hence, to compare it with other complexity measures, we use ($1-DSI$). In this paper, higher complexity means lower separability (\textit{i.e.} $Separability = 1- Complexity$).

\subsection{Two-class synthetic data}
\subsubsection{Typical datasets}

\begin{figure*}[h]
    \centerline{\includegraphics[width=0.85\textwidth]{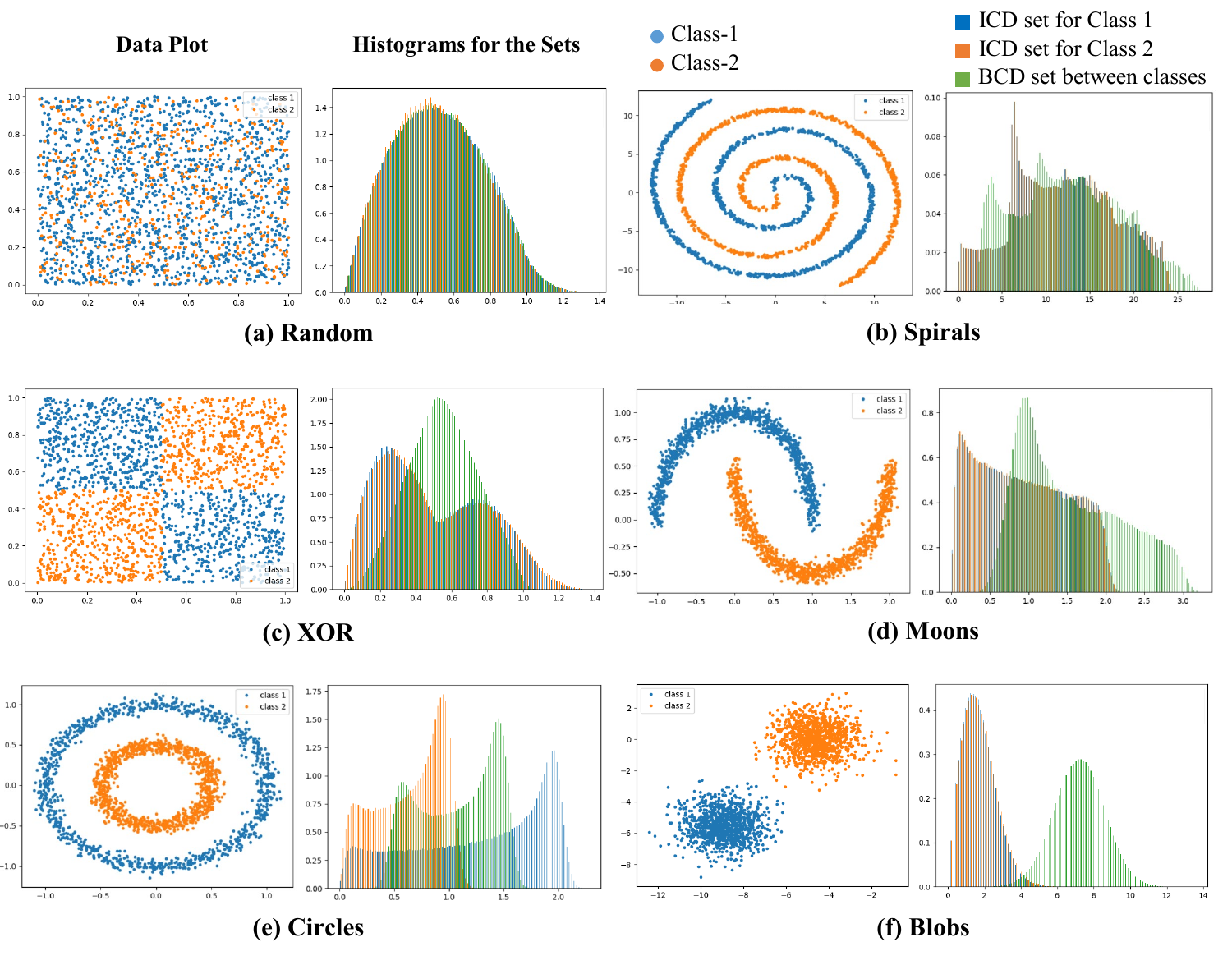}}
    \caption{Typical two-class datasets and their ICD and BCD set distributions}
    \label{fig:4}
\end{figure*}

In this section, we present the results of the DSI and the other complexity measures (listed in Table~\ref{table:1}) for several typical two-class datasets. Figure~\ref{fig:4} displays their plots and histograms of the ICD sets (for Class 1 and Class 2) and the BCD set (between Class 1 and Class 2). Each class consists of 1,000 data points.

\begin{figure}[h]
    \centerline{\includegraphics[width=0.4\textwidth]{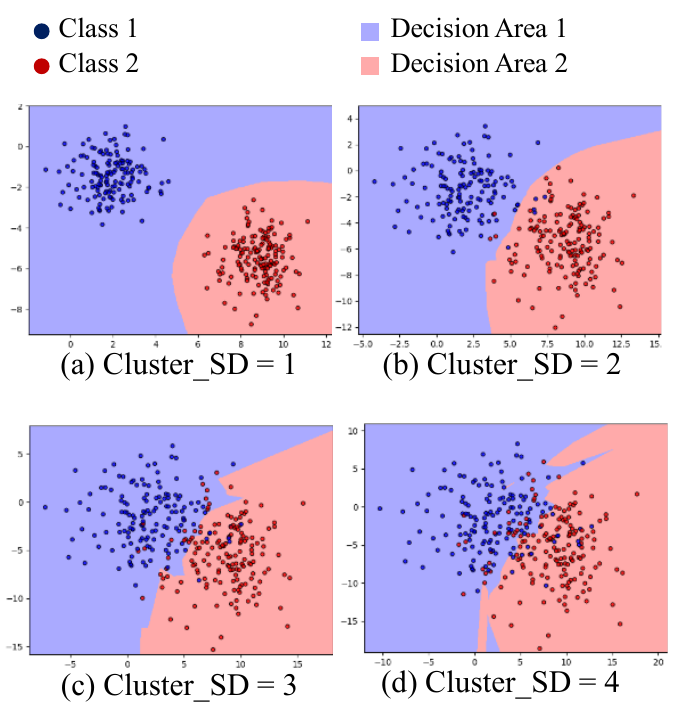}}
    \caption{Two-class datasets with different cluster standard deviation (SD) and trained decision boundaries.}
    \label{fig:clusters}
\end{figure}

Table~\ref{table:2} presents the results for these measures shown in Table~\ref{table:1} and our proposed DSI. The measures shaded in gray are considered to have failed in measuring separability and are not used for subsequent experiments. In particular, the \textit{dimensionality} and \textit{class-imbalance} measures do not work with separability in this situation. The \textit{feature-based} and \textit{linearity} measures measured the XOR dataset as having more complexity than the Random dataset; since the XOR has much clearer boundaries than Random between the two classes, these measures are inappropriate for measuring separability. N1 and N3 produce the same values for the Spiral, Moon, Circle, and Blob datasets, even though the Spiral dataset is obviously more difficult to separate than the Blob dataset, which is the most separable because a single line can be used to separate the two classes. However, the ClsCoef and Hubs measures assign the Blob dataset greater complexity than some other cases. In this experiment, N2, N4, T1, LSC, Density, and the proposed measure ($1- DSI$) are shown to accurately reflect the separability of these datasets.

\begin{table*}[t]
    \caption{Complexity measures results for the two-class datasets (Figure~\ref{fig:4}). The measures shaded in grey failed to measure separability.}
    \label{table:2}
    \centering
    \begin{tabular}{l l c c c c c c}
    \hline\noalign{\smallskip}
     \bfseries Category & \bfseries Code & \bfseries Random & \bfseries Spirals & \bfseries XOR & \bfseries Moons & \bfseries Circles & \bfseries Blobs \\ 
    \noalign{\smallskip}\hline\noalign{\smallskip}
    \rowcolor{gray!25} \cellcolor{white} &F1 & 0.998 &0.947 &1.000 &0.396 &1.000 &0.109 \\
    \rowcolor{gray!25} \cellcolor{white}&  F1v& 0.991 &0.779 &0.999 &0.110 &1.000 &0.019 \\
    \rowcolor{gray!25} \cellcolor{white} & F2& 0.996 &0.719 &0.996 &0.151 &0.329 &0.006 \\
    \rowcolor{gray!25} \cellcolor{white} &  F3& 0.997 &0.843 &0.998 &0.397 &0.708 &0.007\\
    \rowcolor{gray!25}\multirow{-5}{*}{Feature-based}\cellcolor{white} &  F4& 0.995 &0.827 &0.997 &0.199 &0.500 &0.000 \\ 
    \noalign{\smallskip}\hline\noalign{\smallskip}
    \rowcolor{gray!25} \cellcolor{white} &  L1& 0.201 &0.170 &0.328 &0.074 &0.233 &0.000 \\
    \rowcolor{gray!25} \cellcolor{white} & L2& 0.485 &0.407 &0.487 &0.114 &0.458 &0.000 \\
    \rowcolor{gray!25}\cellcolor{white}\multirow{-3}{*}{Linearity}&L3& 0.469 &0.399 &0.486 &0.055 &0.454 &0.000 \\ 
    \noalign{\smallskip}\hline\noalign{\smallskip}
    \rowcolor{gray!25} \cellcolor{white} & N1& 0.719 &0.001 &0.040 &0.001 &0.001 &0.001 \\
    & N2& 0.502 &0.052 &0.071 &0.025 &0.043 &0.017 \\
    \rowcolor{gray!25} \cellcolor{white} & N3& 0.500 &0.000 &0.019 &0.000 &0.000 &0.000 \\
    & N4& 0.450 &0.359 &0.152 &0.099 &0.162 &0.000 \\
    & T1& 0.727 &0.045 &0.043 &0.008 &0.012 &0.001 \\
    \multirow{-6}{*}{Neighborhood}&  LSC& 0.999 &0.976 &0.934 &0.840 &0.914 &0.526 \\ 
    \noalign{\smallskip}\hline\noalign{\smallskip}
    & Density& 0.916 &0.919 &0.864 &0.847 &0.880 &0.812 \\
    \rowcolor{gray!25} \cellcolor{white} & ClsCoef& 0.352 &0.343 &0.267 &0.225 &0.253 &0.332 \\
    \rowcolor{gray!25}\cellcolor{white}\multirow{-3}{*}{Network}&  Hubs& 0.775 &0.822 &0.857 &0.767 &0.650 &0.842 \\ 
    \noalign{\smallskip}\hline\noalign{\smallskip}
    \rowcolor{gray!25} \cellcolor{white} & T2& 0.001 &0.001 &0.001 &0.001 &0.001 &0.001 \\
    \rowcolor{gray!25} \cellcolor{white} & T3& 0.001 &0.001 &0.001 &0.001 &0.001 &0.001 \\
    \rowcolor{gray!25}\cellcolor{white}\multirow{-3}{*}{Dimensionality}& T4& 1.000 &1.000 &1.000 &1.000 &1.000 &1.000 \\ 
    \noalign{\smallskip}\hline\noalign{\smallskip}
    \rowcolor{gray!25} \cellcolor{white} & C1& 1.000 &1.000 &1.000 &1.000 &1.000 &1.000 \\
    \rowcolor{gray!25}\cellcolor{white}\multirow{-2}{*}{Class imbalance}& C2& 0.000 &0.000 &0.001 &0.000 &0.000 &0.000 \\ 
    \noalign{\smallskip}\hline\noalign{\smallskip}
    \textbf{Proposed}& $1-DSI$ & 0.994 &0.953 &0.775 &0.643 &0.545 &0.027\\ 
    \noalign{\smallskip}\hline
    \end{tabular}
\end{table*}

\subsubsection{Training distinctness (TD) and the two-cluster dataset} \label{sec:two-cluster}

\begin{definition} \label{def:3}
The training distinctness (TD) is the average training accuracy during the training process of a neural network classifier.
\end{definition}
\begin{remark}
To quantify the difficulty of training the classifier, we define the training distinctness (TD). A lower TD value means that a dataset is more difficult to train, and this difficulty can reflect the separability of the dataset. Hence, TD is the baseline of data separability.
\end{remark}

In this section, we synthesize a two-class dataset that has different separability levels. The dataset has two clusters, one for each class. The parameter controlling the standard deviation (SD) of clusters influences separability (Figure~\ref{fig:clusters}), and the baseline is the TD we defined.

We created nine two-class datasets using the \texttt{sklearn.datasets.make\_blobs} function in Python. Each dataset has 2,000 data points (1,000 per class) and two cluster centers for the two classes, and the SD parameters of clusters are set from 1 to 9. Along with SD of clusters increasing, distributions of two classes are more overlapped and mixed together, thus reducing the separability of the datasets. 

We use a simple fully-connected neural network (FCNN) model to classify these two-class datasets. This FCNN model has three hidden layers; there are 16, 32, and 16 neurons, respectively, with ReLU activation functions in each layer. The classifier was trained on one of the nine datasets, repeatedly from scratch. We set 1,000 epochs for each training session to compute the TD of each dataset.

In this case, separability could be clearly visualized by the complexity of the decision boundary. Figure~\ref{fig:clusters} shows that datasets with a larger cluster SD need more complex decision boundaries. In fact, if a classifier model can produce decision boundaries for any complexity, it can achieve 100\% training accuracy for any datasets (no two data points from different classes have all the same features) but the training steps (\textit{i.e.} epochs) required to reach 100\% training accuracy may vary. For a specific model, a more complex decision boundary may need more steps to train. Therefore, the average training accuracy throughout the training process – \textit{i.e.} TD – can indicate the complexity of the decision boundary and the separability of the dataset.

Since the training accuracy ranges from 0.5 to 1.0 for two-class classification, to enable a comparison with other measures that range from 0 to 1, we normalize the accuracy by the function:
\[
r\left(x\right)=(x-0.5)/0.5
\]
$rTD=r(TD)$. The range of rTD is from 0 to 1, and the lowest complexity (highest separability) is 1. We also compute N2, N4, T1, LSC, Density, and the proposed measure ($1- DSI$) for the nine datasets and present them together with rTD as a baseline for separability in Figure~\ref{fig:cluster_sd_measure}.

\begin{figure}[h]
    \centerline{\includegraphics[width=0.48\textwidth]{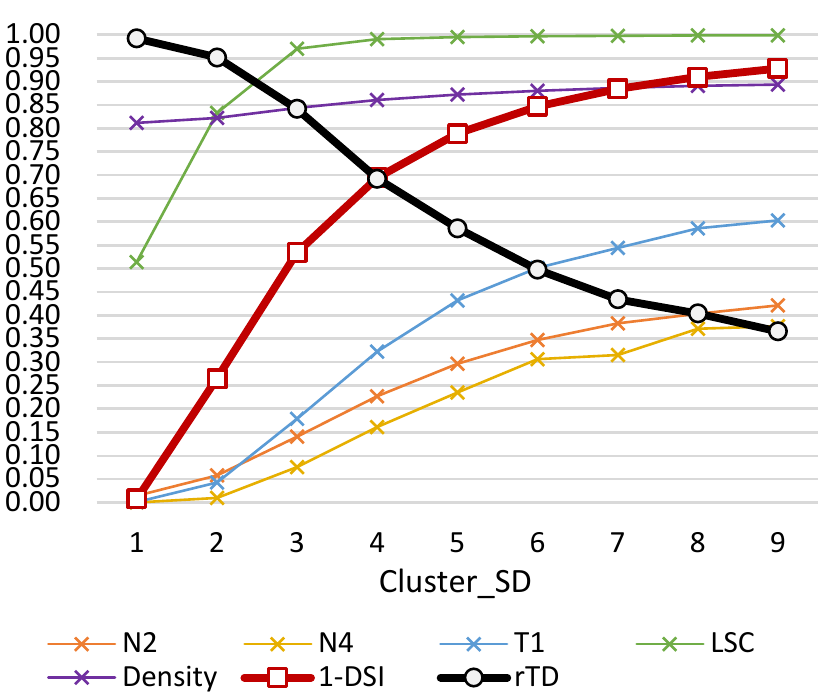}}
    \caption{Complexity measures for two-class datasets with different cluster SDs.}
    \label{fig:cluster_sd_measure}
\end{figure}

As shown in Figure~\ref{fig:cluster_sd_measure}, the rTD for datasets with larger cluster SDs is lower. Lower rTD indicates lower separability and higher complexity. The measures N2, N4, T1, and the proposed measure ($1- DSI$) reflect the complexity of these datasets well, but the LSC and Density measures do not well reflect the complexity because they have relatively high values for the linearly separable dataset (Cluster\_SD $=1$, see Figure~\ref{fig:clusters}a) and increase very slightly for the Cluster\_SD $=5$ to Cluster\_SD $=9$ datasets. The measures N2, N4, and T1 perform similarly to each other. By comparison with them, ($1- DSI$) is the most sensitive measure to the change in separability and has the widest range.

\subsection{CIFAR-10/100 datasets}
\label{cifar}
We next use real images from the CIFAR-10/100 database \cite{16} to examine the separability measures. A simple CNN with four convolutional layers, two max-pooling layers, and one dense layer is trained to classify images; \nameref{sec:appx} presents its detailed architecture.

The CNN classifier is trained on 50,000 images from the CIFAR-10/100 database. To change the classification performance (\textit{i.e.}, the TD), we apply several image pre-processing methods to the input images before training the CNN classifier. These pre-processing methods are supposed to change the distribution of the training images, and thus alter the separability of the dataset. And, the change of data separability will affect the classification results for the given CNN in terms of the TD.

\subsubsection{Subsets and the DSI}
\label{subset_cifar}
Images in the CIFAR-10 dataset are grouped into 10 classes and the CIFAR-100 dataset consists of 20 super-classes. Both CIFAR-10 and CIFAR-100 consist of 50,000 images (32x32, 8-bit RGB), and each image has 3072 pixels (features). 

\begin{figure}[h]
    \centerline{\includegraphics[width=0.48\textwidth]{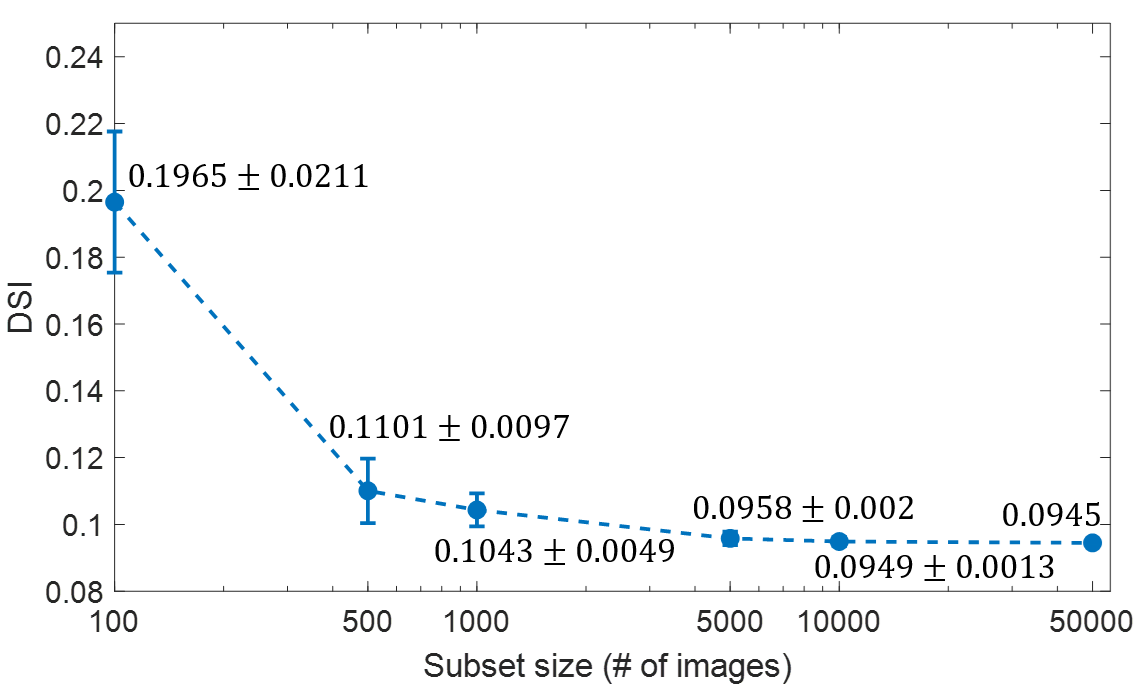}}
    \caption{DSIs of CIFAR-10 subsets}
    \label{fig:5}
\end{figure}

Since to apply the measures using all 50,000 images would be very time-consuming (including the DSI, most of the measures have a time cost of $O(n^2)$), we randomly select subsets of 1/5, 1/10, 1/50, 1/100, and 1/500 of the original training images (\textit{i.e.} without pre-processing) from CIFAR-10 and compute their DSIs. For each subset, we repeat the random selection and DSI computation eight times to calculate the mean and SD of DSIs. Figure~\ref{fig:5} shows that the subset containing 1/50 training images or more does not significantly affect the measures. For example, the DSI for the whole (50,000) training images is 0.0945, while the DSI for a subset of 1,000 randomly selected images is $0.1043 \pm 0.0049$ -- the absolute difference is up to 0.015 (16\%) but with an execution speed that is about 2,500 times greater: computing the DSI for 1,000 images requires about 30 seconds; for the whole training dataset, the DSI calculation requires about 20 hours. In addition, because the same subset is used for all measures, the comparison results are not affected. Therefore, we have randomly selected 1,000 training images to compute the measures, and this subset still accurately reflects the separability/complexity of the entire dataset.

\begin{figure*}[t]
    \centerline{\includegraphics[width=0.9\textwidth]{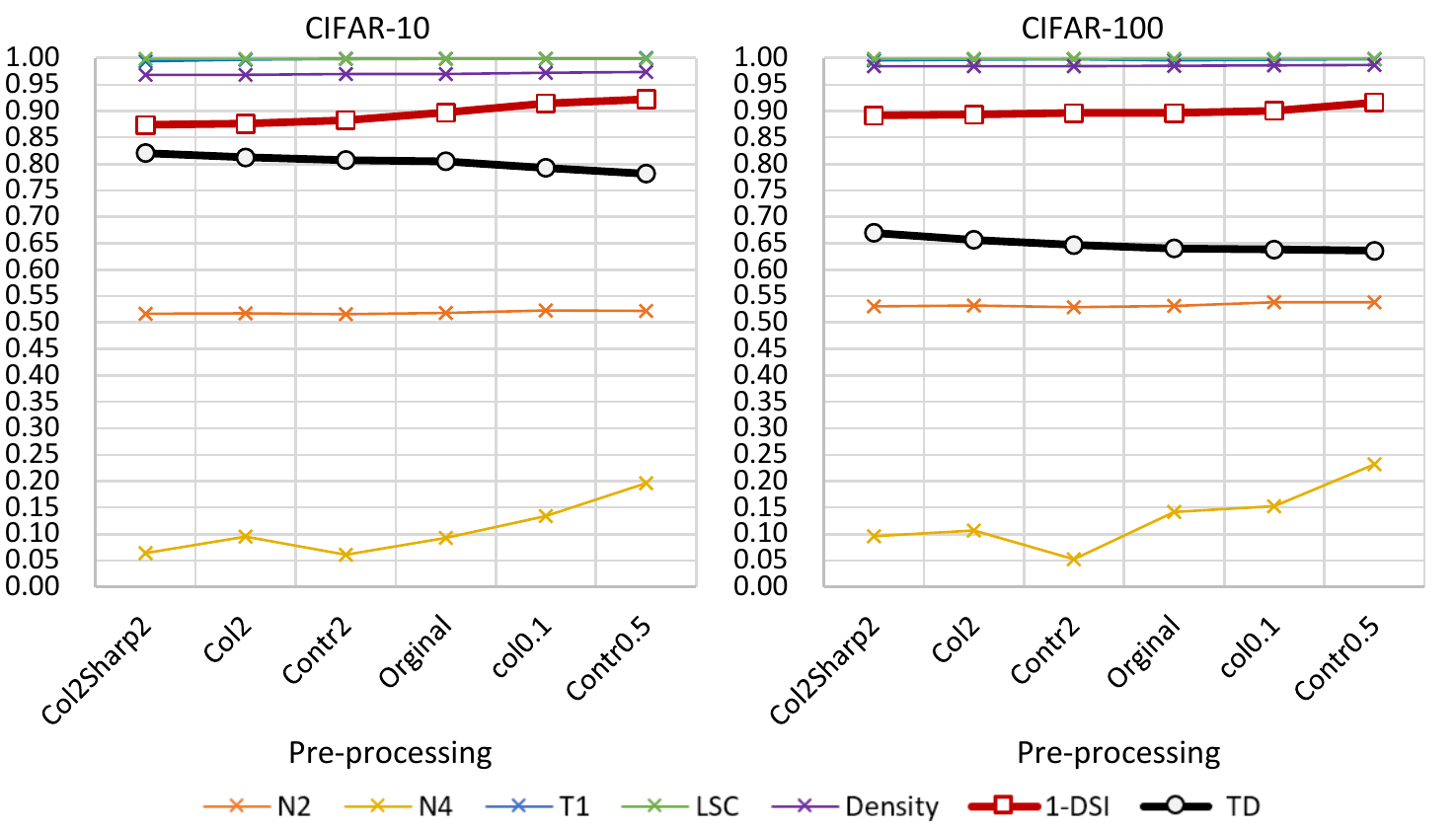}}
    \caption{Manipulation (\textit{e.g.}, pre-processing) of images in datasets can change their complexities.
    We then simultaneously compare different methods of pre-processing and of complexity measures (the y-axes) including the \textit{training distinctness} (TD) as ground truth, on the CIFAR-10/100 datasets.
    The x-axes show pre-processing methods, from left to right: Color (factor $=2$) and Sharpness (2), Color (2), Contrast (2), Color (0.1), and Contrast (0.5).}
    \label{fig:6}
\end{figure*}

\begin{table*}[h]
    \centering
    
    \caption{Values of complexity measures for CIFAR-10} \label{table:a2}
    \begin{tabular}{l l l l l l l}
    \hline\noalign{\smallskip}
    \bfseries Method Code & \bfseries Col2Sharp2 & \bfseries Col2 & \bfseries Contr2 & \bfseries Orginal & \bfseries Col0.1 & \bfseries Contr0.5 \\
    \noalign{\smallskip}\hline\noalign{\smallskip}
        N2 & 0.5169 & 0.5174 & 0.5158 & 0.5185 & 0.5226 & 0.5219 \\

    N4 & 0.0640 & 0.0950 & 0.0610 & 0.0930 & 0.1340 & 0.1960 \\

    T1 & 0.9940 & 0.9970 & 0.9990 & 0.9990 & 0.9990 & 1.0000 \\

    LSC & 0.9985 & 0.9985 & 0.9984 & 0.9985 & 0.9986 & 0.9986 \\

    Density & 0.9682 & 0.9684 & 0.9697 & 0.9701 & 0.9722 & 0.9737 \\

    $1-DSI$ & 0.8739 & 0.8762 & 0.8829 & 0.8973 & 0.9147 & 0.9224 \\

    TD & 0.8207 & 0.8122 & 0.8071 & 0.8051 & 0.7925 & 0.7818 \\
\noalign{\smallskip}\hline
    \end{tabular}
    \bigskip
    \caption{Values of complexity measures for CIFAR-100} \label{table:a3}
    \begin{tabular}{l l l l l l l}
    \hline\noalign{\smallskip}
    \bfseries Method Code & \bfseries Col2Sharp2 & \bfseries Col2 & \bfseries Contr2 & \bfseries Orginal & \bfseries Col0.1 & \bfseries Contr0.5 \\
    \noalign{\smallskip}\hline\noalign{\smallskip}
    N2 & 0.5304 & 0.5324 & 0.5292 & 0.5315 & 0.5383 & 0.5386 \\

    N4 & 0.0960 & 0.1070 & 0.0520 & 0.1420 & 0.1530 & 0.2320 \\

    T1 & 0.9960 & 0.9970 & 0.9980 & 0.9960 & 0.9970 & 0.9980 \\

    LSC & 0.9988 & 0.9988 & 0.9987 & 0.9988 & 0.9989 & 0.9988 \\

    Density & 0.9849 & 0.9849 & 0.9851 & 0.9855 & 0.9867 & 0.9873 \\

    $1-DSI$ & 0.8916 & 0.8933 & 0.8963 & 0.8964 & 0.9005 & 0.9160 \\

    TD & 0.6696 & 0.6563 & 0.6466 & 0.6403 & 0.6380 & 0.6359 \\
 \noalign{\smallskip}\hline
    \end{tabular}
\end{table*}

\subsubsection{Results}
We use the functions in \texttt{PIL.ImageEnhance} (PIL is the Python Imaging Library) with five pre-processing methods applied to the original training images from CIFAR-10/100: Color (factor $=2$) and Sharpness (2), Color (2), Contrast (2), Color (0.1), and Contrast (0.5). Including the original images, we use six image datasets to compute the DSI, TD, and other measures. For the 10-class classification, the training accuracy ranges from 0.1 to 1.0. The TD is not regularized in this section because it has a range close to $[0, 1]$.

Figure~\ref{fig:6} shows the results for CIFAR-10 and CIFAR-100. The x-axis shows the pre-processing methods applied to the datasets, decreasingly ordered from left to right by TD, which is the baseline of data separability. Since a lower TD indicates lower separability and higher complexity, the values of complexity measures should strictly increase from left to right. We put specific values of measures in the Table \ref{table:a2} and \ref{table:a3} because some differences of complexity measures' results are small and not obviously shown by curves. By examining these values, we clearly find that the measures LSC, T1 (which almost overlaps with LSC) and Density have high values and remain nearly flat from left to right (insensitive), while N2 and N4 decrease for the Contrast (2) pre-processing stage. Unlike the other measures, ($1-DSI$) monotonically increases from left to right and correctly reflects (and is more sensitive to) the complexity of these datasets. These results show the advantage of DSI and indicate that image pre-processing is useful for improving CNN performance in image classification.

\section{Discussion}

\subsection{Kolmogorov–Smirnov tests and other measures}
\label{measures}
It is noteworthy that our DSI is compatible with various measures of distances and distributions. The Euclidean distance and Kolmogorov–Smirnov (KS) distance are selected because, based on our experiments, we found that DSI has better sensitivity to separability by using those measures than by the other mentioned measures. The best sensitivity means the change of separability leads to the greatest difference of DSI. 

One key step in DSI computation is to examine the similarity of the distributions of the ICD and BCD sets. We applied the KS distance in our study. The result of a two-sample KS distance is the maximum distance between two cumulative distribution functions (CDFs):
\[
KS(P,Q)=\sup_{x} |P(x)-Q(x)|
\]
Where $P$ and $Q$ are the respective CDFs of the two distributions $p$ and $q$. 

Although many statistical measures, such as the Bhattacharyya distance, Kullback–Leibler divergence, and Jensen–Shannon divergence, could be used to compare the similarity between two distributions, most of them require the two sets to have the same number of data points. It is easy to show that the ICD and BCD sets ($|\{d_x\}|,|\{d_y\}|$, and $|\{d_{x,y}\}|$) cannot be the same size. For example, The $f$-divergence~\cite{nowozin2016f}: 
\[
D_f\left(P,Q\right)=\int q\left(x\right)f\left(\frac{p\left(x\right)}{q\left(x\right)}\right)dx
\]
cannot be used to compute the DSI because the ICD and BCD have different numbers of values, thus the distributions $p$ and $q$ are in different domains. Measures based on CDFs can solve this problem because CDFs exist in the union domain of $p$ and $q$. Therefore, the Wasserstein-distance~\cite{ramdas2017wasserstein} (W-distance) can be applied as an alternative similarity measure. For two 1-D distributions (\textit{e.g.} ICD and BCD sets), the result of W-distance represents the difference in the area of the two CDFs:
\[
W_1\left(P,Q\right)=\int\left|P\left(x\right)-Q\left(x\right)\right|dx
\]

The DSI uses the KS distance rather than the W-distance because we find that normalized W-distance is not as sensitive as the KS distance for measuring separability. To illustrate this, we compute the DSI by using the two distribution measures for the nine two-cluster datasets in Section~\ref{sec:two-cluster}. The two DSIs are then compared by the baseline rTD, which is also used in Section~\ref{sec:two-cluster}. Figure~\ref{fig:ks_w} shows that along with the separability of the datasets decreasing, KS distance has a wider range of decrease than the W-distance. Hence, the KS distance is considered a better distribution measure for the DSI in terms of revealing differences in the separability of datasets. 

\begin{figure}[h]
    \centerline{\includegraphics[width=0.48\textwidth]{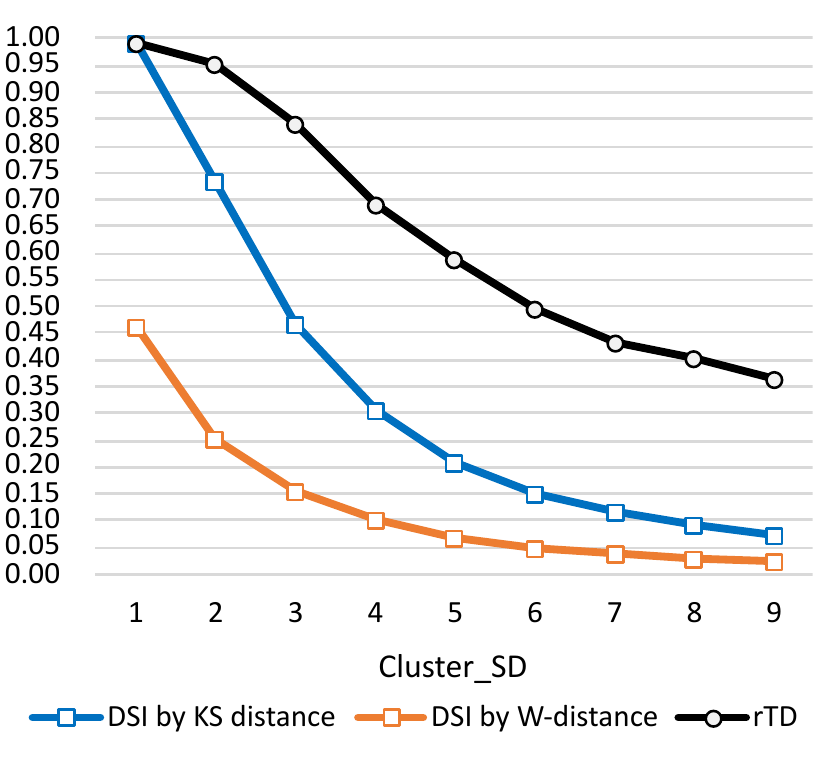}}
    \caption{DSI calculation using different distribution measures.}
    \label{fig:ks_w}
\end{figure}

\subsubsection{Distance metrics}
\label{metrics}

\begin{figure}[h]
    \centerline{\includegraphics[width=0.48\textwidth]{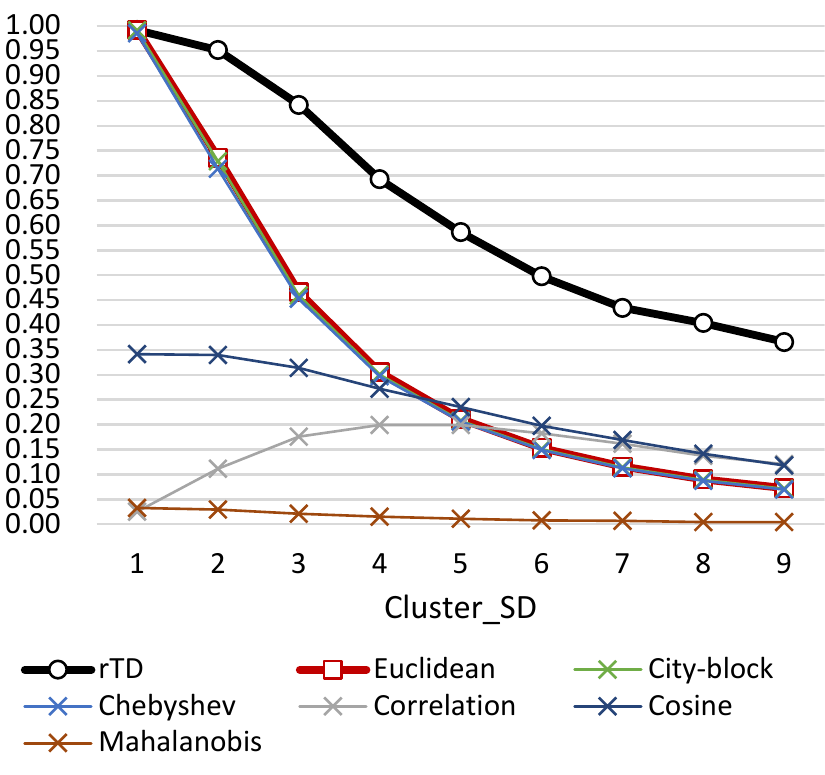}}
    \caption{DSI calculation using different distance metrics.}
    \label{fig:distance_metrics}
\end{figure}

Since the DSI examines the distributions of distances between data points, the used distance metric is another important factor. In this study, the DSI uses the Euclidean distance because it has better sensitivity to separability. We also tested several other commonly used distance metrics: City-block, Chebyshev, Correlation, Cosine, and Mahalanobis distances. We computed the DSIs based on these distance metrics using the nine two-cluster datasets in Section~\ref{sec:two-cluster} and the results are compared by the baseline rTD. Figure~\ref{fig:distance_metrics} shows that the Euclidean distance performs similarly as the City-block and Chebyshev distances. Such results indicate that the Minkowski distance metric ($p$-norm) could be suitable for the computation of DSI.

\subsection{Significance and contributions}
This work is motivated by the need for a new metric to measure the difficulty of a dataset to be classified by machine learning models. This measure of a dataset’s separability is an intrinsic characteristic of a dataset, independent of classifier models, that describes how data points belonging to different classes are mixed.
To measure the separability of mixed data points of two classes is essentially to evaluate whether the two datasets are from the same distribution. According to Theorem \ref{thm:1}, the DSI provides \textbf{an effective way to verify whether the distributions of two sample sets are identical for any dimensionality}. 

As discussed in Section~\ref{Methodological development}, if the DSI of sample sets is close to zero, the very low separability means that the two classes of data are scattered and mixed together with nearly the same distribution. The DSI transforms the comparison of distributions problem in $R^n$ (for two sample sets) to the comparison of distributions problem in $R^1$ (\textit{i.e.}, ICD and BCD sets) by computing the distances between samples. For example, in Figure~\ref{fig:4}(a), samples from Class 1 and 2 come from the same uniform distribution in $R^2$ over $[0,1)^2$. Consequently, the distributions of their ICD and BCD sets are almost identical and the DSI is about 0.0058. In this case, each class has 1,000 data points. For twice the number of data points, the DSI decreases to about 0.0030. When there are more data points of two classes from the same distribution, the DSI will approach zero, which is the limit of the DSI if the distributions of two sample sets are identical.

\subsection{Other applications and future works}
The principal use of the proposed DSI is understanding data separability, which could help in choosing a proper machine learning model for data classification \cite{13}. This will be useful to the model designer who can begin with either a small- or large-scale classifier. For example, a simpler classifier could be used for an easily-separable dataset and thus reduce both computational cost and overfitting. DSI could serve also as a way to benchmark the efficiency of a classifier, given a suitable measure of classifier complexity and computational cost.

Since DSI can evaluate whether two datasets are from the same distribution, we have applied it~\cite{guan2021novel} to evaluate generative adversarial networks (GANs) \cite{17}, competing with the existing IS and FID \cite{18} measures. As with the FID, measuring how close the distributions of real and GAN-generated images are to each other is an effective approach to assess GAN performance because the goal of GAN training is to generate new images that have the same distribution as real images. We have also applied the DSI~\cite{guan2020cluster} as an internal cluster validity index (CVI)~\cite{ARBELAITZ2013243} to evaluate clustering results because the goal of clustering is to separate a dataset into clusters, in the macro-perspective, how well a dataset has been separated could be indicated via the separability of clusters.

By examining the similarity of the two distributions, the DSI can detect (or certify) the distribution of a sample set, \textit{i.e.}, distribution estimation. Several distributions could be assumed (\textit{e.g.}, uniform or Gaussian) and a test set is created with an assumed distribution. The DSI could then be calculated using the test and sample sets. The correct assumed distribution will have a very small DSI (\textit{i.e.}, close to 0) value. 

In addition to the mentioned applications, DSI can also be used as a feature-selection method for dimensionality reduction and an anomaly detection method in data analysis. DSI has broad applications in deep learning, machine learning, and data science beyond direct quantification of separability.

\subsection{Limitations}

\begin{figure*}[t]
    \centering
    \subfloat[\centering DSI$\ \approx 0.3472$; TD$\ =0.96$ ]{{\includegraphics[width=0.42\textwidth]{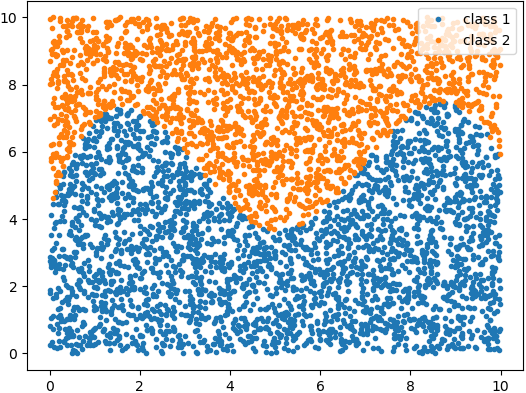} }}
    \qquad
    \subfloat[\centering DSI$\ \approx 0.3472$; TD$\ =0.79$ ]{{\includegraphics[width=0.42\textwidth]{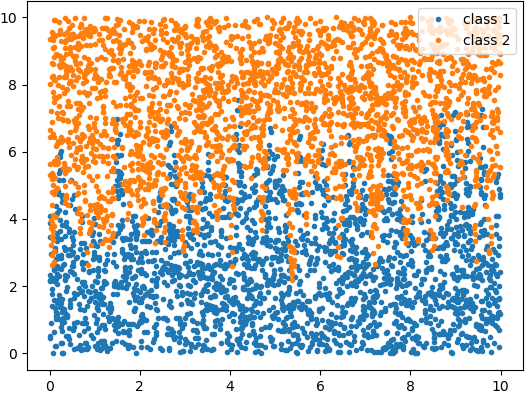} }}
    \caption{Two-class datasets with different decision boundaries. They have the same DSI but different training distinctness (TD). The dataset (b) having more complex decision boundary is more difficult to be classified.}
    \label{fig:two_boundaries}
\end{figure*}

Although DSI works for any dimensionality, dimensionality can affect data distributions and the measurement of distance, which is known as the curse of dimensionality \cite{rust1997using}, thus affecting the DSI. More studies should address the impact of dimensions on DSI and how to compare separability across different numbers of dimensions.

The separability in DSI is defined by the global distributions of data. In some cases, such separability cannot accurately reflect the complexity of the decision boundary because of local conditions. For example, in Figure~\ref{fig:two_boundaries}, two datasets have approximately the same DSI but the decision boundary complexity of the right dataset (b) is higher and its TD (by reusing the previous three-layer FCNN model in Section~\ref{sec:two-cluster}) is smaller. Therefore, the distribution-based separability cannot represent the general training/classification difficulty in terms of the complexity of decision boundaries. One of our studies addresses this problem by examining the complexity of data points on or near the decision boundary \cite{Guan2020Boundary}.

\section{Conclusion}
We proposed a novel and effective measure (\hyperref[KS]{DSI}) to verify whether the distributions of two sample sets are identical for any dimensionality. This measure has a solid theoretical basis. The core \hyperref[thm:1]{Theorem} we proved connects the distributions of two-class datasets with the distributions of their intra-class distance (\hyperref[def:1]{ICD}) sets and between-class distance (\hyperref[def:2]{BCD}) set. Usually, the datasets are in high-dimensional space and thus to compare their distributions is very difficult. By our theorem, to show that the distributions of two-class datasets are identical is equivalent to showing that the distributions of their ICD and BCD sets are identical. The distributions of sets are easy to compare because the distances are in $R^1$. The DSI is based on the KS distance between these distances’ sets.

DSI has many applications. This paper shows its core application, which is an intrinsic separability/complexity measure of a dataset. We consider that different classes of data that are mixed with the same distribution constitute the most difficult case to separate using classifiers. The DSI can indicate whether data belonging to different classes have the same distribution, and thus provides a measure of the separability of datasets. Quantification of the data separability could help a user choose a proper machine learning model for data classification without excessive iteration. Comparisons using synthetic and real datasets show that DSI performs better than many state-of-the-art separability/complexity measures and demonstrate its competitiveness as a useful measure.

Other important applications could be distribution estimation, feature selection, anomaly detection, evaluation of GANs, and playing the role of an internal cluster validity index to evaluate clustering results.

\section*{Appendix: the CNN architecture for Cifar-10/100}
\label{sec:appx}

\begin{table*}[t]
    \caption{The CNN architecture used in Section~\ref{cifar}} 
    \label{table:a1}
    \centering
    \begin{tabular}{l l}
    \hline\noalign{\smallskip}
    \bfseries Layer & \bfseries Shape \\
     \noalign{\smallskip}\hline\noalign{\smallskip}
    Input: RGB image & $32\times32\times3$ \\

    \verb|Conv_3-32 + ReLU| & $32\times32\times32$ \\

    \verb|Conv_3-32 + ReLU| & $32\times32\times32$ \\

        \verb|MaxPooling_2 + Dropout| (0.25) & $16\times16\times32$ \\

        \verb|Conv_3-64 + ReLU| & $16\times16\times64$ \\

        \verb|Conv_3-64 + ReLU| & $16\times16\times64$ \\

        \verb|MaxPooling_2 + Dropout| (0.25) & $8\times8\times64$ \\

        \verb|Flatten| & 4096 \\

        \verb|FC_512 + Dropout| (0.5) & 512 \\

        \verb|FC_10 (Cifar-10) / FC_20 (Cifar-100)| & 10 \verb|/| 20 \\

        Output (\verb|softmax|): [0,1] & 10 (Cifar-10) \verb|/| 20 (Cifar-100) \\
    \noalign{\smallskip}\hline
    \end{tabular}
\end{table*}

The CNN architecture used in Section~\ref{cifar} of the main paper consists of four convolutional layers, two max-pooling layers, and two fully connected (FC) layers. The activation function for each convolutional layer is the ReLU function, and that for output is softmax function, which maps the output value to a range of $[0,1]$, with a summation of 1. The notation \verb|Conv_3-32| indicates that there are 32 convolutional neurons (units), and the filter size in each unit is 3$\times$3 pixels (height$\times$width) in this layer. \verb|MaxPooling_2| denotes a max-pooling layer with a filter of $2\times 2$ pixels window and stride 2. In addition, \verb|FC_n| represents a FC layer with $n$ units. The dropout layer randomly sets the fraction rate of the input units to 0 for the next layer with every update during training; this layer helps the network to avoid overfitting. Table \ref{table:a1} shows the detailed architecture. Our training optimizer is RMSprop~\cite{20} with a learning rate of 1e-4 and a decay of 1e-6, the loss function is categorical cross-entropy, the updating metric is accuracy, the batch size is 32, and the number of total epochs is set at 200.


%
\section*{Compliance with ethical standards}
\textbf{Conflict of interest} The authors declare that they have no conflict of interest.

\bibliographystyle{spmpsci}      
\bibliography{ref}   


\end{document}